\newcommand{\M}{\mathcal M}
\newcommand{\N}{\mathcal N}
\newcommand{\U}{\mathcal U}
\newcommand{\V}{\mathcal V}
\newcommand{\Z}{\mathcal Z}
\newcommand{\W}{\mathcal W}
\renewcommand{\P}{\mathcal P}
\newcommand{\assign}{ \xleftarrow{}}
\newcommand{\true}{\textsc{True}}
\newcommand{\false}{\textsc{False}}
\newcommand{\ran}{\rangle}
\newcommand{\lan}{\langle}
\newcommand{\lang}{\mathcal{E}\mathcal{S}}
\newcommand{\exec}{\mathit{exec}}
\newcommand{\filter}{\mathit{Filter}}
\newcommand{\poss}{\mathit{Poss}}
\newcommand{\pickup}{\mathit{pickup}}
\newcommand{\drop}{\mathit{drop}}
\newcommand{\quench}{\mathit{quench}}
\newcommand{\repair}{\mathit{repair}}
\newcommand{\Holding}{\mathit{Holding}}
\newcommand{\Broken}{\mathit{Broken}}
\newcommand{\Fragile}{\mathit{Fragile}}
\newcommand*{\myvec}[1]{\overrightarrow{\mkern0mu#1}}
\newtheorem{theorem}{Theorem}
\theoremstyle{definition}
\newtheorem{definition}{Definition}
\newtheorem{example}{Example}
\title{A Counterfactual Cause in Situation Calculus}
\author{Daxin Liu
\institute{State Key Laboratory for Novel Software Technology \&\\
School of Artificial Intelligence
\\ Nanjing University, China}
\email{daxin.liu@nju.edu.cn}
\and
Vaishak Belle
\institute{School of Informatics\\
The University of Edinburgh, UK}
\email{vbelle@ed.ac.uk}
}
\begin{document}
\maketitle

\begin{abstract}
Perhaps the most popular modern formulation of actual causality is the HP account by Halpern and Pearl. Recent advancement has focused on extension of HP account to lift its limited expressiveness, in particular, Batusov and Soutchanski proposed a notion of actual achievement cause in the situation calculus, a rich first-order formalism of actions and changes. Among other things, the first-order nature allows for determining the cause of quantified effects in a given action history therein. While intuitively appealing, Batusov and Soutchanski's account is not defined in a counterfactual perspective. In this paper, we propose a notion of cause based on counterfactual analysis. In the context of action history, we show that our notion of cause generalizes naturally to a notion of achievement cause. We analyze the relationship between our notion of the achievement cause and the achievement cause by Batusov and Soutchanski. Finally, we relate our account of cause to Halpern and Pearl's account of actual causality. Particularly, we note some nuances in applying a counterfactual viewpoint to disjunctive effects, a common thorn in definitions of actual causes.
\end{abstract}

\section{Introduction}

The topic of causality \cite{pearl2009causality} has been widely argued to play a central role in artificial intelligence, mainly because it determines and captures how the agent understands the world it operates in. Philosophers and machine learning people have argued for the need for \emph{type causality}, also referred to as \emph{general causality}, where we might be interested in knowing things like whether smoking causes cancer or asthma. But philosophers have also discussed the need for \emph{actual causality} \cite{halpern2016actual}, sometimes also called \emph{token causality}, which describes what sequence of events leads to something happening. E.g., in a legal setting, we wish to understand if the perpetrator intended to cause harm and whether the perpetrator carried out the harm to the victim or an accidental fire in the victim's home.

As argued by Halpern and Hitchcock \cite{halpern2011actual}, the topic of actual causation is a subject of intense study but also considerable disagreement because examples that make a case for one cause against the other can give authenticity or validity to one formalism versus the other. To a large extent, the study of actual causality in AI can be attributed to the importance of actual causality as an essential ingredient of the larger study of causation, primarily resting on David Hume's account of \emph{but-for} cause, i.e. \emph{counterfactual} cause
\cite{hume1748}. Essentially, this says that had event $A$ not happened, then event $B$ would not happen or would not be true. Such a formulation does not require that $A$ is the unique cause for $B$. This is exemplified by the so-called structural equation model approach, which provides a rigorous mathematical framework for looking at actual causation by Pearl \cite{pearl1998definition,pearl2000models}. Since the original definition by Pearl, a number of modifications to the structural model account of actual causation have been introduced, perhaps most recently by Halpern and Pearl's account (henceforth HP account) \cite{halpern2001causes,halpern2015modification,halpern2016actual}.

Causation, and especially actual causation, remains an important topic for capturing blameworthiness and harm \cite{chockler2004responsibility}. Although the structural equation model provides an attractive and arguably simple formalism for describing variables and their descendants, its simplicity becomes a limitation as it lacks temporal reasoning and quantifiers, and fails to distinguish actions from effects. Hence, it is difficult to work with once we go beyond atomic situations requiring one-step analysis. This motivated Pearl and his collaborators \cite{hopkins2007causality} to revisit the structural equation modeling approach in the language of the situation calculus, one of the most widely studied languages for reasoning about action and change. However, these too had problems as identified by Soutchanski \cite{batusov2018situation}  and led to providing a fresh and recent account of actual causality in the situation calculus.

Their key idea is to provide a definition of ``achievement cause'' which, given a history of actions and an outcome, tries to use regression to identify the sequence that leads to this event being true. This has been expanded to the epistemic context \cite{khan2021knowing}. Although this is a very worthy start, what we ask in this paper is whether it is possible to characterize actual causality simply in terms of counterfactual analysis. We argue that a simple definition that almost completely lifts the ``but-for'' definition of causality, but in a rich action setting, is possible. Essentially, our definition looks very much like a reading of the definition of causality in some sense, closely following David Lewis's intuition \cite{Lewis1973}, but also to some extent following the intuition of the Halpern-Pearl modified definition of structural equation models for actual causation \cite{halpern2015modification}.

In fact, motivated by lifting the limited expressiveness of the HP account and adapting the elegant counterfactual analysis therein, Halpern and Pass \cite{HalpernPass2025} recently proposed to define abstract causes for frameworks where counterfactuals are defined: 
\begin{quote}
   ...as long as we start with a framework that has a definition of counterfactual that a user is comfortable with and supports these constructs, we can essentially import the HP definition to that framework.
\end{quote}
Hence, essentially, our approach follows this idea and adapts it to the framework of situation calculus action theories. 
% A short version of the manuscript appeared as an extended abstract in AAMAS 2025  \cite{liu2025counterfactual}. We significantly extend it by adding complexity analysis and comparison between existing approaches.

The rest of the paper is organized as follows: in the next section, we present a modal logic that allows us to express action and changes. Based on it, we introduce our account of the actual cause in actions and analyze its application across various scenarios in Section~\ref{sec:cause}. We formally connect our account to Batusov and Soutchanski's work in Section~\ref{sec:bscause}. In Section~\ref{sec:hp}, we relate our account to the HP account and analyze the differences and limitations of our account. Finally, we discuss related works in Section \ref{sec:relatedwork} and conclude in Section \ref{sec:conclusion}.

We do not think our account is strictly better than others. Nevertheless, since situation calculus offers a natural way to model actions, effects, and causation, our approach’s simplicity could benefit applications like Golog-based robot programming and future action languages requiring causal reasoning.

\section{A Modal Logic of Action and Change}

We start with a modal variant of the situation calculus, the logic $\lang$ \cite{lakemeyer2005semantics}. Our choice of language may seem unusual, but it is shown in \cite{lakemeyer2005semantics} that this modal language corresponds to a major fragment of the classical situation calculus with a more elegant treatment of situations while maintaining many merits of the situation calculus, like the basic action theories to specify the dynamics of a domain and regression to reason about actions. There are, of course, many works explicating the links between the situation calculus and logic programming; see \cite{reiter2001knowledge} for starters.
% We choose this logic as our language because it simplifies the classical situation calculus by avoiding the explicit mention of situations in the language, while retaining many of the merits of the original formalism, such as the basic action theory for expressing the dynamics of a domain, and the use of regression and progression for reasoning about actions.

The logic features a fixed countable domain called \emph{standard names} $\N$, which amounts to having an infinite domain closure axiom together with the unique name assumption.

\paragraph{Syntax} There are terms of sort \emph{object} and \emph{action}. Variables of sort object are denoted by symbols $x, y, \ldots,$ and of sort action by $a$. For standard names, we have $\N =\N_A \cup \N_O$ where $\N_A$ and $\N_O$ are countably infinite sets of action and object standard names. We assume object standard names look like constants. To represent actions, we have \emph{rigid} function symbols along with arities. \footnote{
``Rigid'' means the interpretation is fixed; this is in comparison with ``fluent'', which means the interpretation might vary according to actions.
} For example, $\pickup(C)$ and $\drop(C)$ represent the actions of picking up object $C$ and dropping object $C$.
\emph{Ground terms} are terms without \emph{variables}. A term or formula is \emph{primitive} if its parameters are object standard names.
For example, $\pickup(C)$ and $\drop(C)$ are ground primitive terms. We assume the set of action standard names $\N_A$ is just the set of primitive ground action terms. 

Formulas are built using \emph{fluent predicate} symbols (with arities), equality, the usual logical connectives, and quantifiers, i.e., $\land, \neg, \forall$. They can also be constructed by modal operators $[\cdot],\Box$. Namely, formulas $[t]\phi$ and $\Box\phi$ are read as ``$\phi$ holds after action $t$'' and ``$\phi$ holds after any action sequence''. Namely, a formula $\phi$ could be:
$$
t_1 = t_2 \mid F(t_1,\ldots, t_k) \mid  \phi_1 \land \phi_2  \mid \neg \phi \mid \forall x. \phi \mid [t] \phi \mid \Box\phi
$$
where $t,t_1,\ldots,t_k$ are terms, $F$ is a fluent predicate symbol, $\phi_1$ and $\phi_2$ are other formulas.

For action sequence $z = a_1 \cdots a_k$, we write $[z] \phi$ to mean $[a_1]\ldots[a_k]\phi$. The logic also includes a special fluent $\poss(a)$ to express that executing action $a$ is possible. A \emph{sentence} is a formula without free variables. A formula is called \emph{static} if it does not mention $[\cdot]$ and $\Box$. We use $\phi^x_n$ to denote the formula obtained by substituting all free occurrences of $x$ by $n$.

\paragraph{Semantics}
The semantics is given in terms of possible worlds where a world $w$ determines what holds initially and after any action sequences. Formally, a world $w$ is a mapping $w: \P_F \times \Z \mapsto \{0,1\}$ where $\P_F$ is the set of all primitive formulas, i.e. formulas of the form $F(n_1,\ldots n_k)$ (here, $F$ is a $k$-ary fluent and $n_i$ are standard names), and $\Z$ is the set of action sequences $\Z = \N^*_A$ (including the empty sequence $\lan \ran$) to $\{0,1\}$.
Let $\W$ be the set of all worlds.

\begin{definition}[Truth of Formulas] 
Given a world $w \in \W$ and a sentence $\phi$, we define $w \models \phi$ as $w,\lan \ran \models \phi$, where for any $z \in \Z$, $t \in \N_A$:
\begin{itemize}
    \item $w,z \models F(n_1,\ldots,n_k)$ iff $w[F(n_1,\ldots, n_k),z]=1$;

    \item $w,z \models n_1=n_2$ iff $n_1$ and $n_2$ are identical;

    \item $w,z \models \phi_1 \land \phi_2$ iff $w,z \models \phi_1$ and $w,z \models \phi_2$;

    \item $w,z \models \neg \phi$ iff $w,z \nvDash \phi$;

    \item $w,z \models \forall x.\phi$ iff $w,z \models \phi^x_{n}$ for all standard names $n$ of the right sort;

    \item $w,z \models [t] \phi$ iff $w,z\cdot t \models \phi$;

    \item $w,z \models \Box \phi$ iff $w,z\cdot z' \models \phi$ for all $z' \in \mathcal{Z}$;
\end{itemize}
\end{definition}

We say a formula $\phi$ is \emph{satisfiable} if there exists $w,z$ s.t. $w,z \models \phi$ and \emph{valid} if for all $w,z$, it holds $w,z \models \phi$.  \emph{Logical entailment} is defined as usual, i.e. $\Sigma \models \phi$ if for all $w,z$ s.t. $w,z \models \Sigma$, $w,z \models \phi$.

\paragraph{Basic Action Theory} To express the dynamic of a domain, $\lang$ use a variant of the \emph{basic action theory} (BAT) which includes:
\begin{itemize}
    \item $\Sigma_0$: the initial state axiom is a finite set of static sentences describing what holds initially, also called the initial knowledge base (KB);

    \item $\Sigma_{ap}$: the action preconditions axioms describing the preconditions for actions to be executable;

    \item $\Sigma_{post}$: the successor state axioms, one for each fluent, describing how actions change the truth of the fluent, incorporating Reiter's solution \cite{reiter2001knowledge} to the frame problem.     
\end{itemize}

We lump these axioms as $\Sigma$, i.e.  $\Sigma= \Sigma_0 \cup \Sigma_{ap} \cup \Sigma_{post}$. The following is a BAT for the block world domain. 

\begin{example}
\label{example:blockdomain}
    Consider a simple blocks world domain, involving picking up and dropping objects,
but also quenching (rapidly cooling to very low temperatures) objects so that they become fragile,
adapted from \cite{khan2021knowing}. The action $\pickup(x)$ is always possible (the robot's hand has infinite space), and dropping is only possible when it is already holding the object. Also, broken objects in the robot’s hand can be repaired. So, a $\Sigma_{ap}$ could be \footnote{Free variables are implicitly universally quantified from the outside. The $\Box$ modality has lower syntactic precedence than the connectives,
and $[\cdot]$ has the highest priority.}
$$
\begin{aligned}
\Box \poss(a) \equiv&  (a = \pickup(x))  
\vee 
(a = \drop(x) \land \Holding(x)) \\
\vee & (a = \quench(x) \land \Holding(x)) \\
\vee & (a = \repair(x) \land \Holding(x) \land \Broken(x)).
\end{aligned}
$$
Suppose further that $\Holding$ might be affected by the action $\pickup$ and $\drop$ in the literal way. $\drop$ a fragile object might cause it to be broken and $\repair$ makes it no longer being broken. $\quench$ an object makes it fragile. These can be captured by the following successor state axioms $\Sigma_{post}$:
$$
\begin{aligned}
 &\Box [a]\Holding(x)\equiv a = \pickup(x) \vee a\neq \drop(x) \land \Holding(x)\\
& \Box [a] \Broken(x)  \equiv a=\drop(x) \land \Fragile(x)  \vee a \neq \repair(x) \land \Broken(x)\\
& \Box [a] \Fragile(x)  \equiv a= \quench(x) \vee \Fragile(x) 
\end{aligned}
$$

Let us suppose that $\Sigma= \Sigma_0 \cup \Sigma_{ap} \cup \Sigma_{post}$ where $\Sigma_0$ is as 
$$
\left\{
\Fragile(C),\neg \Fragile(D), \forall x. \neg \Holding(x),\neg \Broken(C),\neg \Broken(D)
\right\}, 
$$
then the BAT $\Sigma$ entails the following:
    \begin{itemize}
        \item $\Sigma \models [\pickup(C)][\drop(C)] \Broken(C)$

        \item $\Sigma \models [\pickup(D)][\quench(D)] \Fragile(D)$  \qed
    \end{itemize}

\end{example}
        
\paragraph{Projection by Regression}
An important task in reasoning about action is projection, which is to decide what holds after an action. More formally, given a BAT $\Sigma$ deciding if a query $\phi$ holds after some action sequence $z$, that is, deciding if $\Sigma \models [z]\phi$. We want to ensure that the action sequence is executable (aka valid and/or legal). 
So let $\exec(\lan \ran) = \true$, and $\exec(a \cdot z) = \poss(a) \land [a]\exec(z)$. Then the projection problem is to determine if
\begin{equation}
    \label{eq:proj}
\Sigma \models \exec(z) \land [z]\phi.
\end{equation}
One of the projection solutions is by regression. Regression systematically converts the query about the future, $[z]\phi$, into another logical equivalent one about now $\phi'$. Namely, the regression $\mathfrak{R}[\Sigma,z,\phi]$ 
of $\phi$ wrt $z$ under $\Sigma$ is the weakest precondition of $[z]\phi$ being true under $\Sigma$. If the context is clear, we also write $\mathfrak{R}[z,\phi]$ instead of $\mathfrak{R}[\Sigma,z,\phi]$. $\lang$ adapts a similar solution of regression as in \cite{reiter2001knowledge}, which recursively replaces fluents in the query about one-step future with their corresponding right-hand side (RHS) of the successor state axioms, which eventually generates a formula without the $[\cdot]$ modality. Hence, the projection problem is reduced to checking if 
\begin{equation}
    \label{eq:regress}
\Sigma_0 \models \mathfrak{R}[\Sigma,\lan\ran,\exec(z) \land [z]\phi],
\end{equation}
where classical first-order reasoning applies. For example, to determine if the BAT $\Sigma$ in Example \ref{example:blockdomain} entails a query $[drop(C)]\Broken(C)$, the regression $\mathfrak{R}[drop(C),\Broken(C)]$ replaces $\Broken(C)$ with the RHS of the successor state of $\Broken$ and instantiates it with the ground action $\drop(C)$, resulting in a formula $\Fragile(C) \vee \Broken(C)$. Namely, after the action $\drop(C)$, $C$ is broken if and only if $C$ is fragile or $C$ is already broken before the action.

\section{Counterfactual Achievement Cause}
\label{sec:cause}

Batusov and Soutchanski in 2018 \cite{batusov2018situation} provided an account of \emph{actual achievement cause} in the situation calculus, overcoming many problems that previous works \cite{hopkins2007causality} had in lifting the expressiveness of HP causality. Later, 
Khan and Soutchanski in 2020 \cite{khan2020necessary} 
studied the sufficient and necessary conditions for actual achievement cause and provided an account that coincides with Batusov and Soutchanski's definition but incorporates the notion of counterfactual analysis, an important aspect in actual causality \cite{halpern2016actual}. In 2021, Khan and Lesp{\'e}rance \cite{khan2021knowing} reconsidered the above definition and provided an account of actual achievement cause in the epistemic situation calculus in terms of modalities, among other things, which allows one to reason about knowledge about the cause, i.e., causal knowledge. A drawback that is common in the above proposal is that causality is defined recursively, and inferring the cause in a causal setting is intuitively non-trivial. The reason is that their notion of cause takes many aspects into account, such as \emph{counterfactual}, \emph{preemption},  and \emph{direct or indirect cause}. Here we show that if we drop some of these features, we are still able to provide a simple account of counterfactual cause in the logic $\lang$.

\paragraph{Minimal Cause} We start with the notion of minimal cause, which is based on the notion of counterfactual.
\begin{definition}[Minimal Cause]
\label{def:minimalcause}
Given a BAT $\Sigma$ and a static sentence $\phi$ representing the goal, we say an action sequence $z$ is the \emph{minimal cause} of $\phi$ wrt $\Sigma$ if
\begin{enumerate}
    \item $\Sigma \models \neg \phi$;
    \item $\Sigma \models [z] \phi$;
    \item $z$ is minimal.
\end{enumerate}

% \begin{enumerate}[inline]
%     \item $\Sigma \models \neg \phi$;

%     \item $\Sigma \models [z] \phi$;
%     \item $z$ is minimal.    
% \end{enumerate}  
\end{definition}

Clearly, this definition requires a notion of \emph{order} for action sequence $\Z$ or a notion of pairwise distance among action sequences (so that one minimizes the relative distance to the empty sequence $\lan \ran$). Below, we provide several definitions of causality in terms of different notions of minimality \cite{belle2023counterfactual}.

\begin{definition}[Length-based Minimality]
Given two sequences $z$,$z'$, define length-based minimality as minimizing $|(length(z)-length(z'))|$, which is the absolute value of the difference in lengths. Length is defined
inductively: $length(\lan \ran) = 0$, and $length(z \cdot a) = length(z) +1$.    
\end{definition}
Intuitively, this criterion measures the number of actions needed for a goal from the initial state.

\begin{example}
    Consider the BAT $\Sigma$ in Example \ref{example:blockdomain}, suppose the goal is $\Broken(C)$, the length-based minimal cause of $\phi$ is $z= \pickup(C) \cdot \drop(C)$. That is:
    \begin{enumerate}
        \item $\Sigma \models \neg  \Broken(C)$; 
        \item $\Sigma \models [z] \Broken(C)$;
        \item $z$ in the least action sequence to achieve $\Broken(C)$.     \qed
    \end{enumerate}

\end{example}

Causes defined in this way might not be unique. Consider the goal $\phi:=\exists x. \Holding(x)$ in the above example, both actions $\pickup(C)$ and $\pickup(D)$ are minimal causes of $\phi$ in $\Sigma$. Another possible measure is based on the properties of the world that are affected. For any $z \in \Z$, define $fluents(z)$ as the set of all fluents mentioned on the left-hand side of the successor state axioms where actions in $z$ occur.

\begin{definition}[Fluent-based Minimality]
Given two sequences $z$, $z'$, define fluent-based minimality as minimizing $|size(fluents(z))-size(fluents(z'))|$.    
\end{definition}
This criterion measures the affected properties of the world, i.e., fluents, between the initial state and goal states.

\begin{example}
    Consider the BAT $\Sigma$ in Example \ref{example:blockdomain} again, suppose the goal is $\phi:=\Broken(C) \vee \Broken(D)$. The action sequence $z = \pickup(C) \cdot \drop(C)$ has fluent set $\{\Holding(x),\Broken(x)\}$. Alternatively, the action sequence $z'=\pickup(D) \cdot \quench(D) \cdot \drop(D)$ can achieve the goal as well (one has to quench $D$ as $D$ is not fragile), yet with a fluent set $\{\Holding(x),\Broken(x),\Fragile(x)\}$ which is larger. So $z$ is a cause of $\phi$  in $\Sigma$ instead of $z'$.  \qed
\end{example}

Minimizing the fluent sets of action sequences might be problematic, as irrelevant ground actions could be included.

\begin{definition}[Plan-and-Effect Minimality]
Given two sequences $z$,$z'$, define plan-and-effect minimality as minimizing both length-based and fluent-based minimality (say, in a lexicographic way). 
\end{definition}

\begin{example}
   Let $\Sigma$ be as in Example \ref{example:blockdomain} again, and the goal be as $\phi:=\Broken(D)$. Action sequence $z = \pickup(D) \cdot  \quench(D) \cdot \drop(D)$ has the same fluent set as the action sequence $z' = \pickup(C) \land \drop(C) \cdot \pickup(D) \cdot  \quench(D) \cdot \drop(D) $, namely, $\{\Holding(x),\Broken(x),$ $\Fragile(x)\}$ (note that we are considering fluent predicates, not fluent instances, so the affected fluents predicates are the same).
    As $z$ is minimal in length, $z$ is the cause of $\phi$ in $\Sigma$ instead of $z'$. 
    \qed
\end{example}

We note that these variant notions of minimal causes align with the logic of \emph{counterfactual dependence}. In the actual world (the basic action theory $\Sigma$), the goal $\phi$ is initially false ($\neg \phi$ holds). The causal nature is revealed by considering the counterfactual: had the action sequence not occurred, the goal would remain unachieved by any smaller sequence. Thus, these minimal causes correspond to minimal plans that transition the system from $\neg\phi$ to $\phi$.
% We comment that the above variant notions of minimal causes are all counterfactual causes: in the initial situation, the negation of a goal $\phi$ holds factually in the basic action theory $\Sigma$. When considering all alternative situations, had the action sequences, i.e., the cause, not happened, the goal would not have been achieved via another action sequence that is \emph{smaller} than the cause. Essentially, except for the requirement that $\Sigma \models \neg \phi$ (Item 1 in Definition \ref{def:minimalcause}), the minimal causes above correspond to the minimal plans that achieve the goal $\phi$ under $\Sigma$.
One could also draw comparisons to the HP definition, the original HP definition of a counterfactual cause: 
where the negation of the formula (the negation of the goal) is set to hold. We call an intervention a cause when the negation of the formula holds, the intervention leads to the formula being true, and this intervention is minimal.

Another remark is that these notions of minimal cause are defined in terms of all possible action sequences. In certain scenarios, one might already know the action history, i.e., the so-called \emph{narratives}, and wish to find the exact actions in the history that cause the goal. This is exactly what Batusov and Soutchanski's account \cite{batusov2018situation} and its variants have done. Here, we propose an alternative and hopefully simpler definition of achievement cause for such a setting.

\paragraph{Minimal Cause in Narratives}
By a \emph{causal setting} $\mathcal{C}$, we mean a triple of BAT $\Sigma$, action sequence $z$, and goal $\phi$, i.e. $\mathcal{C} = \lan \Sigma, z, \phi \ran$ such that $\Sigma \models \exec(z) \land [z] \phi$. Usually, the BAT $\Sigma$ is fixed, and we write $\mathcal{C}=\lan z, \phi \ran$ for short.

By $z' \subseteq z$, we mean $z'$ is a prefix subsequence of $z$, namely, $z' \subseteq z$ iff there exists $z''$ such that $ z = z' \cdot z''$. By disallowing $z''$ to be empty, we obtain the relation $z' \subset z$. 

Given a causal setting $\mathcal{C} = \lan z,\phi \ran$, we define the \emph{minimal achievement cause} (or simply \emph{achievement cause}) as the minimal prefix $z' \subseteq z$ that leads to $\phi$ under $\Sigma$. For this, we need to consider the counterfactual of $z'$ being absent, namely, the sequence $z''= z \backslash z'$ obtained from $z$ by removing the prefix $z'$. However, simply checking if $z''$ leads to $\phi$ under $\Sigma$ is insufficient, as $z''$ might contain actions that are now inexecutable due to the removal of $z'$ from $z$. We wish to filter these actions as well. We denote the result by 
$\filter(z\backslash z')$ and the function $\filter(\cdot)$ is recursively defined as:
\begin{enumerate}
    \item $\filter(\lan \ran) := \lan \ran$; 
    \item for any prefix $z''' \subseteq z\backslash z'$, wlog, assume $z''' = z^\star \cdot a$, then 

    \begin{enumerate}
        \item if $ \Sigma \models [\filter(z^\star)]\poss(a)$, $\filter(z''')= \filter(z^\star) \cdot a$; 
        \item otherwise $\filter(z''') = \filter(z^\star)$.
    \end{enumerate}
    
 % $$
 %   \filter(z''') := \left\{ 
 %    \begin{array}{cc}
 %         \filter(z^\star) \cdot a & \Sigma \models [\filter(z^\star)]\poss(a) \\
 %       \filter(z^\star)  & \mathit{otherwise}
 %    \end{array}
 %    \right.    
 %    $$       
    % $\filter(z''')=\filter(z'') \cdot a$        if $\Sigma \models [\filter(z'')]\poss(a)$, otherwise, $\filter(z''')=\filter(z'')$.
\end{enumerate}
% (1) $\filter(\lan \ran) = \lan \ran$; (2) for prefix $z''' \subseteq z\backslash z'$ of the form $z'' \cdot a$, $\filter(z''')=\filter(z'') \cdot a$        if $\Sigma \models [\filter(z'')]\poss(a)$, otherwise, $\filter(z''')=\filter(z'')$.

% \begin{definition}
%     Given a BAT $\Sigma$, action sequence $z$ such that $\Sigma \models \exec(z)$, and $z' \subseteq z$ a prefix of $z$, for any prefix $z''$ of the remaining sequence $z\backslash z'$ we define $\filter(z'')$ as 
%     \begin{enumerate}
%         \item  $\filter(\lan \ran) = \lan \ran$;      
%         \item For prefix $z'''$ of the form $z'' \cdot a$, $\filter(z''')=\filter(z'') \cdot a$        if $\Sigma \models [\filter(z'')]\poss(a)$, otherwise, $\filter(z''')=\filter(z'')$.
%     \end{enumerate}    
% \end{definition}

E.g., consider the action sequence $z = \pickup(C) \cdot \drop(C) \cdot \pickup(D) \cdot \quench (D) \cdot \drop(D)$ in Example \ref{example:blockdomain}, if we remove the prefix $z' = \pickup(C)$ in $\Sigma$, the remaining executable action sequence of $z\backslash z'$ is $\filter(z\backslash z')= \pickup(D) \cdot \quench (D) \cdot \drop(D)$ as removing $\pickup(C)$ makes $\drop(C)$ impossible in $\Sigma$.

\begin{definition}[Achievement Cause for Narratives]
\label{def:acn}
Given a causal setting $\mathcal{C}= \lan z, \phi\ran$, such that $\Sigma \models \neg \phi$ and $\Sigma \models [z]\phi$. We call a prefix sequence $z'$ of an action sequence $z$, i.e. $z' \subseteq z$, a cause under $\mathcal{C}$ if 
\begin{enumerate}
    % \item $\Sigma \models [z'] \phi$;

    \item $\Sigma \models [\filter(z\backslash z')] \neg \phi$;

    \item $\Sigma \models [z''] \phi$ for all $z''$ such that $z' \subseteq z'' \subseteq z$;

    \item no subsequence of $z'$ holds for Items 1 and 2.   
\end{enumerate}    
\end{definition}
Item 1 is a \emph{sufficient} condition in terms of counterfactual, while Item 2 is a \emph{necessary} condition. Item 3 is a minimal condition (in the sense of sequence length), namely, we are interested in the minimal prefix of $z$ that satisfies Items 1 and 2. Note that, due to Item 3, the cause defined in this way is unique. Intuitively, the subsequence $z'$ is a cause of $\phi$ under BAT $\Sigma$ and narrative $z$, if it is the minimal subsequence that, after executing it, $\phi$ always holds (Item 1), and in the counterfactual that it is absent, the remaining legal actions will not change the truth of $\phi$ (Item 2). We show by examples that our notion of achievement cause is reasonable and useful.

\begin{example}
\label{example:causeinactions}
 Consider the BAT $\Sigma$ in Example \ref{example:blockdomain}. Suppose the goal is $\phi_1:=\Broken(C)$. Clearly, for the narrative  
$z = \pickup(C) \cdot \drop(C) \cdot \pickup(D)$, we have $\Sigma \models \exec(z) \land [z] \phi_1$ and the achievement cause here is the prefix $z' = \pickup(C) \cdot \drop(C)$: in the contingency of $z'$ being absent, the action $\pickup(D)$ alone would not cause $\Broken(C)$ under $\Sigma$. Hence, our notion of cause can successfully identify redundant actions in a narrative that are irrelevant to the achievement of a goal. \qed
\end{example}

\begin{example}
\label{example:disjunctivecauseinactions}
Now, consider the BAT $\Sigma$ in Example \ref{example:blockdomain} again, but this time, we consider another \emph{disjunctive} goal $\phi_2:=\Holding(C) \vee \Holding(D)$, for the narrative $z = \pickup(C) \cdot \pickup(D)$, we have $\Sigma \models \exec(z) \land [z] \phi_2$ and the achievement cause here is the prefix $z' = \pickup(C) \cdot \pickup(D)$: in the contingency of $z'$ being absent, $\filter(z\backslash z') = \lan \ran$, hence under which $\phi_2$ would not hold. The action $\pickup(C)$ is \emph{not} a cause, as even if it is absent, the remaining sequence $\pickup(D)$ makes $\Holding(D)$ true, causing the goal $\phi_2$ to hold. \qed

 Admittedly, some might claim that this is counterintuitive as it is ultimately the action $\pickup(C)$ that achieves the goal. In fact, this is because counterfactual causation suffers in handling preemption (when one event achieves an effect before another can). Our notion of the cause will disregard the temporal order of occurrences of multiple atomic competing events and include all the competing events as a cause. This aligns with the latest version of the HP account of actual cause \cite{halpern2015modification} where the joint of these events is identified as the cause and each individual event is called \emph{part of the cause}. We come back to this in Section \ref{sec:hp}.
\end{example}

\paragraph{Complexity of Reasoning about Minimal Causes} Before ending this section, we briefly touch on the reasoning complexity of the various minimal causes above. The problem is, in general, undecidable, even for the case of simple length-based minimality. This is because, as mentioned before, the minimal causes above correspond to the minimal plan; hence, reasoning about cause subsumes the planning problem in the situation calculus as a special case. The latter is known to be undecidable. In fact, to do planning, one needs to solve the projection problem as in Eq~\eqref{eq:proj}, which requires first-order reasoning as in Eq~\eqref{eq:regress}, hence, it is undecidable. Nevertheless, in practice, one might wish to consider decidable fragments such as $C^2$, the two-variable fragment with counting quantifiers \cite{PacholskiST00}, or to impose constraints on the knowledge base $\Sigma_0$, say, to have complete knowledge, so that one could have a Prolog interpreter of the language and use Prolog for a reasoning engine as in \cite{reiter2001knowledge}.

A more interesting case is for the minimal cause in narratives. Of course, the problem is still in general undecidable, yet the question we ask here is what is the complexity in terms of the size of the narratives if we have an oracle that can solve the projection problem in $O(1)$ time. For a narrative $z$, let $|z|$ be the number of actions in $z$. Naively, to reason about the achievement cause for a narrative $z$, one needs to check all the prefixes ($|z|$ in total) of $z$, and for each such prefix $z'$, one needs to check items 1 and 2 in Def.~\ref{def:acn}, which in the worst case take time linear in $|z|$. Hence, the overall time complexity is $O(|z|^2)$. Nevertheless, this procedure can be accelerated by a 
bisection search. Instead of checking items 1 and 2 in Def.~\ref{def:acn} step-by-step, one could start with the half-prefix of $z$ and apply bisection search recursively. Hence, we have the following result:
\begin{theorem}
Given a causal setting $\mathcal{C}= \lan z, \phi\ran$, such that $\Sigma \models \neg \phi$ and $\Sigma \models [z]\phi$. Suppose further that we have an oracle that can solve the projection problem in Eq.~\eqref{eq:proj} in $O(1)$ time, the problem of computing the cause under causal setting $\mathcal{C}$ can be solved in time $O(|z|\log |z|)$.
\end{theorem}

\section{Relation with Batusov and Soutchanski's Achievement Cause}
\label{sec:bscause}

Batusov and Soutchanski
\cite{batusov2018situation} provided an account of \emph{actual achievement cause} in the situation calculus. To compare the two definitions, we retrofit their account in the logic $\lang$. 

\begin{definition}[Actual Achievement Cause \cite{batusov2018situation}]
\label{def:bs18}
A causal setting $\mathcal{C} = \lan z, \phi \ran$ satisfies the achievement conditions of $\phi$ via the action-sequence pair $\lan a^\star,z^\star\ran$ if and only if there exists a pair $\lan a',z'\ran$ such that
\begin{enumerate}
    \item $\Sigma \models [z'] \neg \phi$;
    \item  for all $z''$ s.t. $z' \cdot a' \subseteq z'' \subseteq z$, $\Sigma \models [z''] \phi$;
    \item besides, it holds that (a) either $z^\star=z'$ and $a^\star = a'$; (b) or causal setting $\lan \Sigma, z', \mathfrak{R}[a',\phi] \land \poss(a')\ran$ 
        satisfies the achievement conditions via the action-sequence pair $\lan a^\star,z^\star\ran$.

    % \begin{enumerate}
    %     \item  either $z^\star=z'$ and $a^\star = a'$;
    %     \item or causal setting 
    %     % $\lan \Sigma, z', \mathfrak{R}[a',\phi] \land \mathfrak{R}[z',\poss(a')]\ran$ 
    %     $\lan \Sigma, z', \mathfrak{R}[a',\phi] \land \poss(a')\ran$ 
    %     satisfies the achievement conditions via the action-sequence pair $\lan a^\star,z^\star\ran$.
    % \end{enumerate}    
\end{enumerate}
\end{definition}
\noindent Whenever a causal setting $\mathcal{C}$ satisfies the achievement condition via the pair $\lan a^\star, z^\star\ran$, the action $a^\star$ executed after $z^\star$ is said to be \emph{an achievement cause} in $\mathcal{C}$.

Intuitively, the action $a^\star$ executed after the sequence $z^\star$ is an achievement condition of $\phi$ in the causal setting $\mathcal{C} = \lan z, \phi \ran$ if it is a \emph{direct cause} (satisfying Item 1, Item 2, and Item 3(a) ), or it is a \emph{indirect cause} that enable the execution of a direct cause (satisfying Item 1, Item 2, and Item 3(b)). 
\cite{batusov2018situation} show that the achievement causes in a causal setting $\mathcal{C}$ forms a finite sequence of action-sequence pairs $\mathit{BSChain}(\mathcal{C})$, which is called the achievement causal chain of $\mathcal{C}$.

Given an action sequence $z$, we define $\mathit{Chain}(z)$ as $\mathit{Chain}(z):=\{\lan a',z'\ran: z'\cdot a' \subseteq z \}$, namely, $\mathit{Chain}(z)$ is the set of actions and together with the corresponding prefixes up to it. 
% By $\mathit{Chain}(\mathcal{C})$, we mean $\mathit{Chain}(z')$ where $z'$ is the achievement cause of $\mathcal{C}$ by Definition~\ref{def:acn}. 

\begin{theorem}
    Given a causal setting $\mathcal{C} = \lan z, \phi \ran$, let $\mathit{BSChain}(\mathcal{C})$ be achievement causal chain of $\mathcal{C}$ by Definition ~\ref{def:bs18}, $z'$ be the cause of $\phi$ wrt $\Sigma$ and $z$ as Definition~\ref{def:acn}, then $\mathit{BSChain}(\mathcal{C}) \subseteq \mathit{Chain}(z')$.
\end{theorem}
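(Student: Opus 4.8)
The plan is to show that every action-sequence pair $\lan a^\star, z^\star\ran \in \mathit{BSChain}(\mathcal{C})$ also appears in $\mathit{Chain}(z')$, i.e. that $z^\star \cdot a^\star \subseteq z'$ where $z'$ is the achievement cause of Definition~\ref{def:acn}. The key observation to establish first is that $\mathit{BSChain}(\mathcal{C})$ is ``contiguous'' in the sense that it is exactly $\mathit{Chain}(z_{\mathit{BS}})$ for some prefix $z_{\mathit{BS}} \subseteq z$: Batusov and Soutchanski's recursion, when it recurses in Item 3(b), always passes to a \emph{strictly shorter} prefix $z'$ of the current narrative while keeping the goal a regressed formula, so the cause pairs it produces all lie along a single chain ending at the \emph{last} direct cause. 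So it suffices to prove the single containment $z_{\mathit{BS}} \subseteq z'$, where $z_{\mathit{BS}}$ is (the prefix up to and including) the last action identified as an achievement cause by Definition~\ref{def:bs18}.

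To prove $z_{\mathit{BS}} \subseteq z'$ I would argue by contradiction: suppose the last BS-cause pair is $\lan a^\star, z^\star\ran$ with $z^\star \cdot a^\star \not\subseteq z'$, i.e. $z' \subset z^\star \cdot a^\star$ (using that both are prefixes of $z$, they are comparable; and $z'$ cannot be longer, since if $z^\star\cdot a^\star \subseteq z'$ we are done). Then $z' \subseteq z^\star$. The strategy is to contradict the minimality of $z'$ (Item 3 of Definition~\ref{def:acn}) by exhibiting a proper prefix of $z'$ — or at worst $z^\star$ itself, if it were shorter, but here $z^\star \supseteq z'$, so I actually need the other direction. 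Let me re-orient: the real tension is that BS's last direct cause $a^\star$ is the action that ``finally'' makes $\phi$ true along $z$, so any prefix of $z$ not including it should admit a legal continuation keeping $\neg\phi$; this is essentially what Item 3(a)/Item 1 of Definition~\ref{def:bs18} give us, namely $\Sigma \models [z^\star]\neg\phi$ and $\Sigma \models [z''] \phi$ for all $z''$ with $z^\star \cdot a^\star \subseteq z'' \subseteq z$. The plan is then: if $z' \subseteq z^\star$, then $\filter(z \backslash z')$ contains the subsequence continuing through $z^\star$ and then $a^\star$ (modulo filtering of inexecutable actions), and I must show this continuation still makes $\phi$ false at some point or, better, that Item 2 of Definition~\ref{def:acn} fails — contradicting that $z'$ is a cause at all. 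Conversely if $z^\star \cdot a^\star \subseteq z'$ but the containment of the whole chain failed for some \emph{earlier} pair, the contiguity lemma already rules that out. So the crux reduces to: \textbf{the achievement cause $z'$ of Definition~\ref{def:acn} must extend at least up to the last BS direct cause $a^\star$}, because removing everything from $a^\star$ onward (as $\filter(z\backslash z')$ would, if $z' \subseteq z^\star$) leaves a sequence that — by BS Item 1 and the regression semantics — keeps $\phi$ false, so such a $z'$ would fail Item 1 of Definition~\ref{def:acn} (it would not have $\Sigma \models [z'']\phi$ for the relevant $z''$), hence cannot be the cause. I would make this precise by induction on the length of $\mathit{BSChain}(\mathcal{C})$, peeling off the outermost direct cause and invoking the induction hypothesis on the regressed causal setting $\lan \Sigma, z', \mathfrak{R}[a',\phi] \land \poss(a')\ran$.

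The main obstacle I anticipate is handling the interaction between $\filter$ and the BS recursion through regressed goals. When Definition~\ref{def:bs18} recurses, the goal changes from $\phi$ to $\mathfrak{R}[a',\phi]\land\poss(a')$, and the executability bookkeeping there (the explicit $\poss(a')$ conjunct) must be matched against the $\filter$-based executability repair in Definition~\ref{def:acn}. I would need a lemma stating that for a prefix $z'$ and the ``tail'' $z^\star = \filter(z \backslash z')$, we have $\Sigma \models [z']\,\mathfrak{R}[z^\star, \psi]$ iff $\Sigma \models [z' \cdot z^\star]\psi$ together with executability of that tail — essentially soundness of regression (available from the discussion of $\mathfrak{R}$ in the excerpt) plus the defining property of $\filter$ that it retains exactly the executable actions. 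A secondary subtlety: the theorem only claims $\subseteq$, not equality, so I do \emph{not} need to show $z'$ is minimal \emph{among prefixes containing $z_{\mathit{BS}}$} — $z'$ may legitimately be strictly longer than $z_{\mathit{BS}}$ (e.g.\ when later actions are needed to keep $\phi$ robustly true under Item 1, or in disjunctive-goal situations like Example~\ref{example:disjunctivecauseinactions} where the counterfactual forces $z'$ to include competing actions). Keeping the argument one-directional should make the induction manageable; the delicate point is simply ensuring the base case — a single direct cause $a^\star$ after $z^\star$ with $\Sigma\models[z^\star]\neg\phi$ — forces $z^\star \cdot a^\star \subseteq z'$, which follows because otherwise $\filter(z\backslash z')$ either ends before $a^\star$ (so $\phi$ stays false, violating Item 1 of Definition~\ref{def:acn} at $z'' = z$) or we could shorten $z'$.
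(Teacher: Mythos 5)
Your overall skeleton matches the paper's: reduce the claim to showing that the prefix $z^\star\cdot a^\star$ of the \emph{last} (direct-cause) pair in $\mathit{BSChain}(\mathcal{C})$ is contained in $z'$, note that both are prefixes of $z$ and hence comparable, and derive a contradiction from $z'\subseteq z^\star$. But the crucial step --- actually producing that contradiction --- is never correctly carried out. You oscillate between contradicting Item 2 of Definition~\ref{def:acn} via $\filter(z\backslash z')$ and ``violating Item 1 at $z''=z$'', and neither works as stated: $\Sigma\models[z]\phi$ holds by the definition of a causal setting, so Item 1 instantiated at $z''=z$ yields nothing; and if $z'\subseteq z^\star$ then $\filter(z\backslash z')$ still \emph{contains} $a^\star$ and its successors (filtering removes a prefix, not a suffix), so there is no reason $\phi$ ``stays false'' along it --- typically it becomes true, and turning that into a violation of Item 2 would require exactly the $\filter$/regression bookkeeping you flag as an obstacle. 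The paper's proof sidesteps all of this with a one-line observation you never state: if $z'\subseteq z^\star\subseteq z$, then Item 1 of Definition~\ref{def:acn} instantiated at $z''=z^\star$ gives $\Sigma\models[z^\star]\phi$, which directly contradicts the direct-cause condition $\Sigma\models[z^\star]\neg\phi$ from Item 1 of Definition~\ref{def:bs18}. You quote both of these facts but never combine them.

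With that fix, the remaining apparatus you propose --- the regression/$\filter$ compatibility lemma and the induction on the length of $\mathit{BSChain}(\mathcal{C})$ through the regressed causal settings --- is unnecessary: only the last pair needs analysis, since every earlier (indirect-cause) pair has a strictly shorter prefix and is swept into $\mathit{Chain}(z')$ automatically once $z^\star\cdot a^\star\subseteq z'$. A minor further point: your ``contiguity'' claim is stated too strongly. $\mathit{BSChain}(\mathcal{C})$ need not \emph{equal} $\mathit{Chain}(z_{\mathit{BS}})$, because intermediate actions that are neither direct nor indirect causes are omitted from the chain; the weaker fact you actually use --- that all pairs in the chain lie below the last direct cause --- is the correct one and is what the paper tacitly relies on when it says it suffices to show $z_n\cdot a_n\subseteq z'$.
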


\begin{proof}
    Without loss of generality, we suppose $\mathit{BSChain}(\mathcal{C}) = \{\lan a_1,z_1\ran,\ldots, \lan a_n,z_n \ran\}.$ It suffices to show that $z_n \cdot a_n \subseteq z'$ as $\mathit{Chain}(z')$ contains all actions and prefixes pairs of $z'$.  Since $\lan a_n,z_n \ran$ is the last pair in $\mathit{BSChain}(\mathcal{C})$, action $a_n$ executed in $z_n$ is a direct cause of $\phi$, namely, we have: 1. $\Sigma\models [z_n] \neg \phi$; 2. for all $z''$ such that $z_n\cdot a_n \subseteq z'' \subseteq z$, $\Sigma \models [z''] \phi$.

    Supposing $z_n \cdot a_n \not \subseteq z'$, since  $z_n \cdot a_n \subseteq z$ and $z' \subseteq z$, we have $z' \subset z_n \cdot a_n$ (hence $z' \subseteq z_n$). By Def.~\ref{def:acn}, $\Sigma \models [z^\star]$ for all $z^\star$ s.t. $z' \subseteq z^\star \subseteq z$. Since $z' \subseteq z_n \subseteq z$, we have $\Sigma \models [z_n] \phi$ which contradicts with $\Sigma\models [z_n] \neg \phi$ as above. Hence, the hypothesis $z_n \cdot a_n \not \subseteq z'$ is wrong.  \qed   
\end{proof}

\begin{example}
    Consider a variant of the disjunctive goal in Example \ref{example:disjunctivecauseinactions}. Namely, let $\mathcal{C}=\lan  \Sigma,z,\phi_3 \ran$  be a causal setting where  $\Sigma$ is as Example \ref{example:blockdomain}, $z= \pickup(C) \cdot \drop(C) \cdot \pickup(D)$, and $\phi_3:=\Broken(C) \vee \Holding(D)$. The achievement causal chain is as   
$\mathit{BSChain}(\mathcal{C}) = \{\lan\pickup(C), \lan \ran\ran,\lan\drop(C),\pickup(C)\ran \}.$
This is because, after the action $\drop(C)$ is executed after $\pickup(C)$, $\phi$ always holds under $\Sigma$, hence it is a direct cause. Besides, the action $\pickup(C)$ executed initially is an indirect cause as it enables the action $\pickup(C)$. Meanwhile,  for our notion of counterfactual achievement cause (of $\mathcal{C}$) $z' = \pickup(C) \cdot \drop(C) \cdot \pickup(D)$, we have $\mathit{Chain}(z')= \mathit{BSChain}(\mathcal{C}) \cup \{ \lan \pickup(D), \pickup(C)\cdot \drop(C)\ran \}$.  
\qed
\end{example}

\section{Relation with HP Definition of Causality}
\label{sec:hp}

Batusov and Soutchanski
\cite{batusov2018situation} had made a formal comparison between their account of actual achievement cause and the HP account of actual causality which can be summarized as (in the language of $\lang$) for every (part of) cause in a causal model in the HP account, there exists a corresponding action-sequence pair that appear in the achievement casual chain of translated causal setting (corresponding to the HP casual model). However, as aforementioned, Batusov and Soutchanski's account is not based on counterfactual analysis. We informally compared our notion of cause with the HP \emph{modified} definition of actual causality \cite{halpern2015modification} in terms of counterfactual analysis, especially in handling preemption or disjunctive goals. 

The HP account of causality is based on the \emph{structural equation model}. Below, we informally review the HP account and refer interested readers to \cite{halpern2016actual}.  
A causal model $\M$ is a tuple $\lan \U, \V,  \{f_X\}_{X\in \V} \ran $ where $\U$ and $\V$ are disjoint sets of \emph{exogenous} and \emph{endogenous} variables representing external or independent factors and internal factors. The value of each endogenous variable $X \in \V$ is specified by a function $f_X$ that may depend on exogenous variables and on endogenous variables that precede  $X$ with respect to a fixed order on $\V$. Moreover, the dependencies among variables are acyclic. A \emph{context} $\myvec{u}$ assigns values to all variables in $\U$. The \emph{language} of the HP account consists of formulas of the form $X=x$ (value assignment), boolean connectives, and formulas of the form $[\myvec{Y} \xleftarrow{} \myvec{y}] \phi$ (read as after intervening to set variables $\myvec{Y}$ to values $\myvec{y}$, $\phi$ holds).
The truth of formulas is given as (boolean connectives are understood in the standard way):
\begin{itemize}
    \item $(\M, \myvec{u}) \models X = x$ if $f_X$ has values $x$ when setting $\U$ according to $\myvec{u}$.

    \item $(\M, \myvec{u}) \models [\myvec{Y} \xleftarrow{} \myvec{y}] \phi$ if $(\M_{\myvec{Y} \assign \myvec{y}}, \myvec{u}) \models \phi$ where $\M_{\myvec{Y} \assign \myvec{y}}$ is the intervened model obtained from $\M$ by replacing $f_Y$ for $Y\in \myvec{Y}$ with the trivial function $f_Y=y$.
\end{itemize}

\begin{definition}
A conjunction of events $\myvec{X} = \myvec{x}$ ($\myvec{X} \subseteq \V$) is an \emph{actual cause} in $(\M,\myvec{u})$ of a query $\phi$ if all the following holds:
\begin{enumerate}
    \item $(\M,\myvec{u}) \models \myvec{X} = \myvec{x}$ and $(\M,\myvec{u}) \models \phi$.
    \item There exists a set $\myvec{W}$ (disjoint with $\myvec{X}$) of variables in $\V$ with $(\M,\myvec{u}) \models \myvec{W} = \myvec{w}$ and a set of values $\myvec{x}'$ such that $(\M,\myvec{u}) \models [\myvec{X}\xleftarrow{}\myvec{x}',\myvec{W}\xleftarrow{}\myvec{w}]\neg \phi$.
    \item No proper sub-conjunction of $\myvec{X}=\myvec{x}$ satisfies 1,2.
\end{enumerate}    
\end{definition}
Depending on how contingencies are handled (Item 2), the HP account has multiple variants. The definition above is the latest version, i.e., the \emph{modified} HP definition \cite{halpern2015modification}, where we only consider those contingencies or counterfactuals where the un-intervented variables have their original values ($\myvec{W} \xleftarrow{}\myvec{w}$). This is the same in spirit as our notion of achievement cause in narrative: when considering the counterfactual that a given action prefix in a narrative is absent, we would still consider the remaining possible actions in the narrative rather than other action sequences. The tuple $\lan \myvec{W},\myvec{w},\myvec{x}'\ran$
is called a witness to the fact that $\myvec{X}=\myvec{x}$ is a cause of $\phi$ and each conjunct of $\myvec{X}=\myvec{x}$ is called \emph{part of the cause}.

\begin{example}
    Consider the well-known ``Forest Fire'' example from \cite{halpern2005causes,halpern2016actual} with three endogenous variables: $MD$ (match dropped by an arsonist), $L$ (lightning strike), and $FF$ (the forest is on fire). For the disjunctive case where either $MD=\true$ or $L=\true$ is sufficient to cause a forest fire, i.e., $L=\true$, the causal model $M$ consists of 1 equation:  $FF:= MD=\true \vee L=\true$. In the context $\myvec{u}$ where both $MD=\true$ and $L=\true$, the cause of $FF=\true$ is the conjunction of $MD=\true$ and $L=\true$ in the causal model $(\M,\myvec{u})$ with a trivial witness $\myvec{W}=\emptyset$. Neither $MD=\true$ nor $L=\true$ is a cause, they are but part of the cause.
    \qed
\end{example}

The HP account has many appealing points and has been successfully adapted in many ways, such as defining \emph{blame and responsibility} \cite{chockler2004responsibility}, accounting for \emph{evidence} in verification \cite{baier2021verification}, and model-checking \cite{chockler2008causes,beer2012explaining}. Yet, it is also criticized for its limited expressiveness \cite{hopkins2007causality}, such as being unable to handle quantified effects compared to ours. The most severe issue is that the notion of cause there highly depends on how one models the domain, namely, the cause of a domain in a model might no longer be a cause when modeling the domain in another way. It is open what the criteria are in modeling a causal setting. Below, we show some possible and ``reasonable'' HP modeling of the disjunctive goal examples in Example \ref{example:disjunctivecauseinactions}, and compare our notion of cause with the HP cause in terms of counterfactual analysis. 

A possible HP modeling $M$ of the goal $\phi_2:=\Holding(C) \vee \Holding(D)$ and narrative $z = \pickup(C) \cdot \pickup(D)$ in Example \ref{example:disjunctivecauseinactions} is just like the ``Forest Fire'' example above. Namely, using two endogenous variables $PC$ and $PD$ for the two actions $\pickup(C)$ and $\pickup(D)$ and one endogenous variable $GL$ for the goal. The equations are given as ``$GL:= \true \textrm{ if } PC = \true \vee PD=\true \textrm{ otherwise } \false$''.  We will further assume that two exogenous variables enable $PC$ and $PD$, and omit the trivial enabling equations $f_{PC}$ and $f_{PD}$. It is not hard to see that under such model $\M$ and the context $\myvec{u}$ where both two actions are enabled, the cause of $GL$ is the conjunction of $PC = \true$ and $PD=\true$, just as the ``Forest Fire'' example, and the cause corresponds to our notion of achievement cause as in Example~\ref{example:disjunctivecauseinactions}. This is because the above modeling disregards the temporal order of occurrences $PC$ and $PD$, just as our notion of cause does. 

Indeed, it is possible to add the temporal relation of competing events in the HP modeling by adding extra endogenous variables. E.g. one can add two endogenous variables $PCG$, $PDG$ representing $PC$ and $PD$ causes the goal respectively, besides, add an equation ``$PDG := \true \textrm{ if } PCG=\false \land PD = \true \textrm{ otherwise } \false$'' to the model, i.e. $PD$ cause $GL$ only if $PC$ does not cause $GL$ and $PD$ happens. In this way, the HP account will indeed ascribe the cause of $GL$ to $PC$. Nevertheless, there are problems. First, it is unclear how many such variables are needed. For the above examples, where the events $\pickup(C)$ and $\pickup(D)$ are atomic in achieving $\phi_2$, two variables suffice. Things become complicated when events are not atomic. In such cases, ground actions of competing events might interleave; hence it is unclear how to identify these events. Another problem lies in the meaning and relations among these additional variables. In the above example, $PCG$ and $PDG$ seem to hardcode the causal relation among competing events and goals (seems cheating in some sense), so it is unsurprising to derive $PC$ as the cause. Lastly, it is unclear what the equations look like. For non-atomic events, actions might depend on each other; a natural thought is to specify $f_{X_a}$ for variable $X_a$ (supposing it corresponds to action $a$) by using variables of actions that $a$ depends on. However, this is also problematic as equations in HP models encode not just \emph{dependency} (or conditionals) but also causation. Namely, the equation $FF:=MD=\true \vee L = \true$ no just says $FF$ \emph{depends on} $MD$ and $L$, but also, $MD=\true$ or $L = \true$ will \emph{cause} $FF=\true$. For the sequence $\pickup(C) \cdot \drop(C)$, $\drop(C)$ might depend on $\pickup(C))$, but it does not mean that the occurrence of $\pickup(C)$ will cause the occurrence of $\drop(C)$. The relation between dependency (conditionals) and causality has remained open in philosophy for a long time \cite{mackie1965causes,gunther2019learning,andreas2020causation}. 

All of this is to say that the comparison between the HP formalism and our proposal in terms of modeling capabilities is not exact. While it may be possible to pick one particular model from one formalism and translate it to a model in another formalism in a faithful way, there are multiple considerations that need to be taken into account, especially in the presence of durative actions and competing actions.

\paragraph{Limitation of the Proposed Achievement Cause} 
Compared to the HP account, where the actual cause depends on how one models the domain, our methods have uniform standard modeling: the basic action theory. Yet, this limits the applicability of the notion of counterfactual achievement cause. In particular, the proposed definition suffers in handling competing non-atomic events where actions of different events might be non-consecutive and interleave. 

% Another comment is that while the proposed framework has uniform standard modeling, it relies on another fundamental hypothesis: the modeler has expert knowledge about the domain and hence the modeling is faithful. In this regard, the proposed framework is not better than the HP account. However, the situation calculus formalism does have richer expressiveness that allows us to talk about actions, quantified effects, and so on.

\begin{example}   

Consider $\phi_4:=\Broken(C) \vee \Broken(D)$ and $z= \pickup(C) \cdot \drop(C) \cdot \pickup(D) \cdot \quench(D) \cdot \drop(D)$ in the basic action theory $\Sigma$ of example \ref{example:blockdomain}. It is easy to check that $\Sigma \models \exec(z) \land [z]\phi_4$. Here, the achievement cause is $z' = \pickup(C) \cdot \drop(C) \cdot \pickup(D)$: in the contingency of $z'$ being absent, the remaining action sequence $\quench(D) \cdot \drop(D)$ is impossible, hence $\filter(z\backslash z') = \lan \ran$ under which $\Sigma$ does not entails $\phi_4$. It seems counter-intuitive as the cause is not $\pickup(C) \cdot \drop(C)$ and contains one \emph{redundant} action $\pickup(D)$. Yet, it is not in the sense of counterfactual analysis: the redundant action $\pickup(D)$ serves to disable the contingency of $\pickup(D) \cdot \quench(D) \cdot \drop(D)$ under which $\phi_4$ still holds in $\Sigma$ as $\Broken(D)$ becomes true. However, in the contingency of $\pickup(C) \cdot \drop(C) \cdot \pickup(D)$ being absent, the remaining sequence $\quench(D) \cdot \drop(D)$ is no longer executable under $\Sigma$. \qed
\end{example}

In the general case of multiple competing events (say, sequences of actions) to achieve the goal, the proposed notion of causes will include all the events (except the last one) and part of the last event. The fact that there are issues here is not surprising. This is precisely why the topic of causation remains an important and active area of philosophy, computer science, and artificial intelligence. What we have provided here is a first-order account that seems to have many reasonable features in an action language.

\section{Related Work}
\label{sec:relatedwork}

The most relevant work is by \cite{batusov2018situation}, who formalize causality in the situation calculus, improving on \cite{hopkins2005actual,hopkins2007causality} by handling quantified effects and preconditions. \cite{khan2021knowing} extends this to an epistemic modality, but not counterfactually. A counterfactual account is given by \cite{khan2020necessary}, parallel to \cite{batusov2018situation} but with a complex recursive definition.

% The most relevant work is by \cite{batusov2018situation}, who formalize causality in the situation calculus, capturing quantified effects and properly handling action preconditions—an improvement over earlier structural equation models by  \cite{hopkins2005actual,hopkins2007causality}.  \cite{khan2021knowing} extend this to an epistemic setting, treating causality as a modality, allowing nested reasoning about knowledge and causes. However, these accounts are not based on counterfactuals. A counterfactual approach is offered in \cite{khan2020necessary}, which characterizes actual achievement causes with necessary and sufficient conditions, paralleling \cite{batusov2018situation} but with a complex, recursive definition.

The HP definition of actual cause has seen wide adoption and extensions, including blame and responsibility \cite{chockler2004responsibility}, verification \cite{baier2021verification}, databases \cite{meliou2010causality}, and model checking \cite{chockler2008causes,beer2012explaining}. Recent adaptations include mathematical explanations \cite{halpern2023mathematical} and image classifier explanations \cite{chockler2024explaining}. Still, it faces criticism for limited expressiveness \cite{hopkins2007causality} and dependence on domain modeling choices \cite{halpern2011actual}. There are many works that try to lift the expressiveness. E.g., \cite{GladyshevADD025} tried to capture temporal causal reasoning based on the CPLTL logic, i.e., causal LTL with past, in non-recursive structural equation models; \cite{LimaL24} proposed a propositional causal modal logic based on knowledge base semantics, interestingly, they showed that it is possible to capture the HP actual cause of a structural equation model in the language; \cite{BarberoS21} studied team semantics for interventionist counterfactuals and causal dependencies. Nevertheless, all these works are essentially propositional. In contrast, our account builds upon a first-order modal logic of actions and changes. Alternative approaches for accounting causes include Bochman’s logical models \cite{bochman2003logic,bochman2018actual,bochman2018laws}, grounded in the NESS condition \cite{wright1985causation}, a refinement of Mackie’s INUS condition \cite{mackie1965causes}. Unlike these, Batusov and Soutchanski’s account (and ours) is not reducible to any INUS-style condition and can handle quantified effects, which Bochman's cannot.
Probabilistic views of actual causation also exist, both philosophical \cite{lewis1986postscripts,kvart2004causation,fenton2017proposed} and formal, such as Pearl’s early causal beams \cite{pearl1998definition,pearl2009causality} and causal probabilistic logic \cite{vennekens2009cp,beckers2016general}.

Lastly, besides the situation calculus, there is another line of work on formalisms of actions and agency, the ``see to it that'' logic \cite{facingfuture2001,BalbianiHT08}, i.e., STIT. While the STIT family focuses on agents and choices, the situation calculus treats actions as first-class citizens; hence, it is more suitable for AI applications like planning and decision-making. Recent trends show that the studies of these two communities are starting to converge. E.g., \cite{HORTYPACUIT2017} enriches the STIT logic with action types in order to capture important concepts such as the epistemic sense of ability. Hence, it is interesting to see how a breakthrough from one community can benefit the other. For instance, there are works that study counterfactual emotions \cite{LoriniS11} and responsibility \cite{LoriniLM14} in the STIT logic.

\section{Conclusion}
\label{sec:conclusion}

In this paper, we propose a notion of counterfactual cause in the situation calculus. In the context that a given action history leads to a goal, we show that our notion of cause generalizes naturally to a notion of achievement cause. We analyze the relationship between our notion of the achievement cause and existing works. Future works include extending the proposal to an epistemic setting, just like \cite{khan2021knowing}. Besides, it is promising to investigate how our method can be adapted for robot programming in Golog and other applications that are being built on action languages and require actual causation.

\section*{Acknowledgement}
Daxin Liu is funded by the National Natural Science Foundation of China (NSFC No. 62506152) and the Fundamental and Interdisciplinary Disciplines Breakthrough Plan of the Ministry of Education of China (No. JYB2025XDXM118); Vaishak Belle is funded by a Royal Society University Research Fellowship at the University of Edinburgh.
\nocite{*}
\bibliographystyle{eptcs}
\bibliography{main}
\end{document}